\newcommand{\lyxmathsym}[1]{\ifmmode\begingroup\def\b@ld{bold}
  \text{\ifx\math@version\b@ld\bfseries\fi#1}\endgroup\else#1\fi}
\providecommand{\tabularnewline}{\\}
\theoremstyle{plain}
\newtheorem{thm}{\protect\theoremname}
  \theoremstyle{plain}
  \newtheorem{prop}[thm]{\protect\propositionname}
  \theoremstyle{remark}
  \newtheorem*{rem*}{\protect\remarkname}
\author{Raif M. Rustamov \& James T. Klosowski\\
AT\&T Labs Research} 
\renewcommand{\vec}[1]{\mathbf{#1}}
\newcommand{\node}{s}
\newcommand{\tran}{\mkern-1.5mu\mathsf{T}}
  \providecommand{\propositionname}{Proposition}
  \providecommand{\remarkname}{Remark}
\providecommand{\theoremname}{Theorem}
\begin{document}

\title{Interpretable Graph-Based Semi-Supervised Learning via Flows}
\maketitle
\begin{abstract}
In this paper, we consider the interpretability of the foundational
Laplacian-based semi-supervised learning approaches on graphs. We
introduce a novel flow-based learning framework that subsumes the
foundational approaches and additionally provides a detailed, transparent,
and easily understood expression of the learning process in terms
of graph flows. As a result, one can visualize and interactively explore
the precise subgraph along which the information from labeled nodes
flows to an unlabeled node of interest. Surprisingly, the proposed
framework avoids trading accuracy for interpretability, but in fact
leads to improved prediction accuracy, which is supported both by
theoretical considerations and empirical results. The flow-based framework
guarantees the maximum principle by construction and can handle directed
graphs in an out-of-the-box manner. 
\end{abstract}

\section{Introduction}

Classification and regression problems on networks and data clouds
can often benefit from leveraging the underlying connectivity structure.
One way of taking advantage of connectivity is provided by graph-based
semi-supervised learning approaches, whereby the labels, or values,
known on a subset of nodes, are propagated to the rest of the graph.
Laplacian-based approaches such as Harmonic Functions \cite{ZhuGahramani}
and Laplacian Regularization \cite{Belkin:2006} epitomize this class
of methods. Although a variety of improvements and extensions have
been proposed \cite{zhu05survey,Belkin:2006,DBLP:journals/jmlr/ZhouB11,partially_absorbing,icml2014c1_solomon14},
the interpretability of these learning algorithms has not received
much attention and remains limited to the analysis of the obtained
prediction weights. In order to promote accountability and trust,
it is desirable to have a more transparent representation of the prediction
process that can be visualized, interactively examined, and thoroughly
understood.

Despite the label propagation intuition behind these algorithms, devising
interpretable versions of Harmonic Functions (HF) or Laplacian Regularization
(LR) is challenging for a number of reasons. First, since these algorithms
operate on graphs in a global manner, any interactive examination
of the prediction process would require visualizing the underlying
graph, which becomes too complex for even moderately sized graphs.
Second, we do not have the luxury of trading prediction accuracy for
interpretability: HF/LR have been superseded by newer methods and
we cannot afford falling too far behind the current state of the art
in terms of the prediction accuracy. Finally, HF and LR possess useful
properties such as linearity and maximum principle that are worth
preserving.

In this paper, we introduce a novel flow-based semi-supervised learning
framework that subsumes HF and LR as special cases, and overcomes
all of these challenges. The key idea is to set up a flow optimization
problem for each unlabeled node, whereby the flow sources are the
labeled nodes, and there is a single flow destination---the unlabeled
node under consideration. The source nodes can produce as much out-flow
as needed; the destination node requires a unit in-flow. Under these
constraints, the flow optimizing a certain objective is computed,
and the amount of flow drawn from each of the labeled nodes gives
the prediction weights. The objective function contains an $\ell_{1}$-norm
like term, whose strength allows controlling the sparsity of the flow
and, as a result, its spread over the graph. When this term is dropped,
we prove that this scheme can be made equivalent to either the HF
or LR method.

This approach was chosen for several reasons. First, the language
of flows provides a detailed, transparent, and easily understood expression
of the learning process which facilitates accountability and trust.
Second, the sparsity term results in flows that concentrate on smaller
subgraphs; additionally, the flows induce directionality on these
subgraphs. Smaller size and directedness allow using more intuitive
graph layouts \cite{Gansner:2000:OGV:358668.358697}. As a result,
one can visualize and interactively explore the precise subgraph along
which the information from labeled nodes flows to an unlabeled node
of interest. Third, the sparsity term injects locality into prediction
weights, which helps to avoid flat, unstable solutions observed with
pure HF/LR in high-dimensional settings \cite{NIPS2009_3652}. Thus,
not only do we avoid trading accuracy for interpretability, but in
fact we gain in terms of the prediction accuracy. Fourth, by construction,
the flow-based framework is linear and results in solutions that obey
the maximum principle, guaranteeing that the predicted values will
stay within the range of the provided training values. Finally, directed
graphs are handled out-of-the-box and different weights for forward
and backward versions of an edge are allowed.

The main contribution of this paper is the proposed flow-based framework
(Section \ref{sec:Flow-based-Weights}). We investigate the theoretical
properties of the resulting prediction scheme (Section \ref{sec:Theoretical-Properties})
and introduce its extensions (Section \ref{sec:Extensions}). After
providing computational algorithms that effectively make use of the
shared structure in the involved flow optimization problems (Section
\ref{sec:Computation}), we present an empirical evaluation both on
synthetic and real data (Section \ref{sec:Experiments}).

\section{\label{sec:Flow-based-Weights}Flow-based Framework}

We consider the transductive classification/regression problem formulated
on a graph $G=(V,E)$ with node-set $V=\{1,2,...,n\}$ and edge-set
$E=\{1,2,...,m\}$. A function $f(\cdot)$ known on a subset of labeled
nodes $1,2,...,n_{l}$ needs to be propagated to the unlabeled nodes
$n_{l}+1,n_{l}+2,...,n$. We concentrate on approaches where the predicted
values depend linearly on the values at labeled nodes. Such linearity
immediately implies that the predicted value at an unlabeled node
$\node$ is given by 
\begin{equation}
f(\node)=\sum_{i=1}^{n_{l}}w_{i}(\node)f(i),\label{eq:InterpolationScheme}
\end{equation}
where for each $i=1,2,...,n_{l}$, the weight $w_{i}(\cdot)$ captures
the contribution of the labeled node $i$.

In this section, we assume that the underlying graph is undirected,
and that the edges $e\in E$ of the graph are decorated with dissimilarities
$d_{e}>0$, whose precise form will be discussed later. For an unlabeled
node $\node$, our goal is to compute the weights $w_{i}(\node),i=1,2,...,n_{l}$.
To this end, we set up a flow optimization problem whereby the flow
sources are the labeled nodes, and there is a single flow destination:
the node $\node$. The source nodes can produce as much out-flow as
needed; the sink node $\node$ requires a unit in-flow. Under these
constraints, the flow optimizing a certain objective function is computed,
and the amount of flow drawn from each of the source nodes $i=1,2,...,n_{l}$
gives us the desired prediction weights $w_{i}(\node)$.

With these preliminaries in mind, we next write out the flow optimization
problem formally. We first arbitrarily orient each edge of the undirected
graph, and let $A$ be $n\times m$ signed incidence matrix of the
resulting directed graph: for edge $e$, from node $i$ to node $j$,
we have $A_{ie}=+1$ and $A_{je}=-1$. This matrix will be useful
for formulating the flow conservation constraint. Note that the flow
conservation holds at unlabeled nodes, so we will partition $A$ into
two blocks. Rows $1...n_{l}$ of $A$ will constitute the $n_{l}\times m$
matrix $A_{l}$ and the remaining rows make up $A_{u}$; these sub-matrices
correspond to labeled and unlabeled nodes respectively. The right
hand side for the conservation constraints is captured by the $(n-n_{l})\times1$
column-vector $\vec{b}_{\node}$ whose entries correspond to unlabeled
nodes. All entries of $\vec{b}_{\node}$ are zero except the $\node$-th
entry which is set to $-1$, capturing the fact that there is a unit
in-flow at the sink node $\node$. For each edge $e$, let the sought
flow along that edge be $x_{e}$, and let $\vec{x}$ be the column-vector
with $e$-th entry equal to $x_{e}$. Our flow optimization problem
is formulated as:

\begin{equation}
\min_{\vec{x}\in\mathbb{R}^{m}}\frac{1}{2}\sum_{e=1}^{m}d_{e}(x_{e}^{2}+\lambda|x_{e}|)\quad\mbox{subject to}\;A_{u}\vec{x}=\vec{b}_{\node},\label{eq:UndirectedOptimization}
\end{equation}
where $\lambda\geq0$ is a trade-off parameter.

The prediction weight $w_{i}(\node)$ is given by the flow amount
drawn from the labeled node $i$. More precisely, the weight vector
is computed as $\vec{w}(\node)=A_{l}\vec{x}$ for the optimal $\vec{x}$.
Note that the optimization problem is strictly convex and, as a result,
has a unique solution. Furthermore, the optimization problem must
be solved separately for each unlabeled $\node$, i.e. for all different
vectors $\vec{b}_{\node}$, and the predicted value at node $\node$
is computed via $f(\node)=\sum_{i=1}^{n_{l}}w_{i}(\node)f(i)=(\vec{w}(\node))^{\tran}\vec{f}_{l}.$

\paragraph*{Flow Subgraphs}

Our optimization problem resembles that of elastic nets \cite{ElementsStatisticalLearning},
and the $\ell_{1}$-norm like first-order term makes the solution
sparse. For a given value of $\lambda,$ taking the optimal flow's
support---all edges that have non-zero flow values and nodes incident
to these edges---we obtain a subgraph $G_{\node}(\lambda)$ along
which the flows get propagated to the node $\node$. \textit{This
subgraph together with the flow values on the edges constitutes an
expressive summary of the learning process, and can be analyzed or
visualized for further analysis.} Potentially, further insights can
be obtained by considering $G_{\node}(\lambda)$ for different values
of $\lambda$ whereby one gets a ``regularization path'' of flow
subgraphs.

In addition, the optimal flow induces directionality on $G_{\node}(\lambda)$
as follows. While the underlying graph was oriented arbitrarily, we
get a preferred orientation on $G_{\node}(\lambda)$ by retaining
the directionality of the edges with positive flows, and flipping
the edges with negative flows; this process completely removes arbitrariness
since the edges of $G_{\node}(\lambda)$ have non-zero flow values.
In addition, this process makes the optimal flow positive on $G_{\node}(\lambda)$;
it vanishes on the rest of the underlying graph.

\paragraph*{Discussion}

To understand the main features of the proposed framework, it is instructive
to look at the tension between the quadratic and first-order terms
in the objective function. While the first-order term tries to concentrate
flows along the shortest paths, the quadratic term tries to spread
the flow broadly over the graph. This is formalized in Section \ref{sec:Theoretical-Properties}
by considering two limiting cases of the parameter $\lambda$ and
showing that our scheme provably reduces to the HF when $\lambda=0$,
and to the 1-nearest neighbor prediction when $\lambda\to\infty$.
Thanks to the former limiting case, our scheme inherits from HF the
advantage of leveraging the community structure of the underlying
graph. The latter case injects locality and keeps the spread of the
flow under control.

Limiting the spread of the flow is crucially important for interactive
exploration. The flow subgraphs $G_{\node}(\lambda)$ get smaller
with the increasing parameter $\lambda>0$. As a result, these subgraphs,
which serve as the summary of the learning process, can be more easily
visualized and interactively explored. Another helpful factor is that,
with the induced orientation, $G_{\node}(\lambda)$ is a directed
acyclic graph (DAG); see Section \ref{sec:Theoretical-Properties}
for a proof. The resulting topological ordering gives an overall direction
and hierarchical structure to the flow subgraph, rendering it more
accessible to a human companion due to the conceptual resemblance
with the commonly used flowchart diagrams.

Limiting the spread of the flow seemingly inhibits fully exploiting
the connectivity of the underlying graph. Does this lead to a less
effective method? Quite surprisingly the complete opposite is true.
HF and LR have been known to suffer from flat, unstable solutions
in high-dimensional settings \cite{NIPS2009_3652}. An insightful
perspective on this phenomenon was provided in \cite{getting_lost,DBLP:journals/jmlr/LuxburgRH14}
which showed that random walks of the type used in Laplacian-based
methods spread too thinly over the graph and ``get lost in space''
and, as a result, carry minimal useful information. Therefore, limiting
the spread within our framework can be seen as an antidote to this
problem. Indeed, as empirically confirmed in Section \ref{sec:Experiments},
in contrast to HF/LR which suffer from almost uniform weights, flow-based
weights with $\lambda>0$ concentrate on a sparse subset of labeled
nodes and help to avoid the flat, unstable solutions.

\section{\label{sec:Theoretical-Properties}Theoretical Properties}

In this section we prove a number of properties of the proposed framework.
We first study its behavior at the limiting cases of $\lambda=0$
and $\lambda\to\infty$. Next, we prove that for all values of the
parameter $\lambda$, the subgraphs $G_{\node}(\lambda)$ supporting
the flow are acyclic and the maximum principle holds.

\paragraph*{Limiting Behavior}

Introduce the column-vector $\vec{d}$ consisting of dissimilarities
$d_{e}$, and the diagonal matrix $D=\mathrm{diag}(\vec{d})$. It
is easy to see that the matrix $L=AD^{-1}A^{\tran}$ is the un-normalized
Laplacian of the underlying graph with edge weights (similarities)
given by $1/d_{e}$. As usual, the edge weights enter the Laplacian
with a negative sign, i.e. for each edge $e=(i,j)$ we have $L_{ij}=-1/d_{ij}$. 
\begin{prop}
\label{prop:equivalence}When $\lambda=0$, the flow-based prediction
scheme is equivalent to the Harmonic Functions approach with the Laplacian
$L=AD^{-1}A^{\tran}$.\end{prop}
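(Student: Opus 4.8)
The plan is to exploit the fact that at $\lambda=0$ the objective in \eqref{eq:UndirectedOptimization} collapses to the purely quadratic form $\tfrac{1}{2}\vec{x}^{\tran}D\vec{x}$, turning the flow problem into an equality-constrained minimum-energy problem that I can solve in closed form via Lagrange multipliers; I would then match the resulting predicted value against the standard closed form of the Harmonic Functions solution.

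First I would attach a multiplier vector $\vec{\mu}$ (indexed by the unlabeled nodes) to the constraint $A_{u}\vec{x}=\vec{b}_{\node}$ and form the Lagrangian $\tfrac{1}{2}\vec{x}^{\tran}D\vec{x}-\vec{\mu}^{\tran}(A_{u}\vec{x}-\vec{b}_{\node})$. Its stationarity condition is $D\vec{x}=A_{u}^{\tran}\vec{\mu}$, hence $\vec{x}=D^{-1}A_{u}^{\tran}\vec{\mu}$. Since $D$ is positive diagonal the objective is strictly convex and the constraint is affine, so (as already noted in the text) this stationary point is the unique global minimizer and no separate second-order argument is required. Substituting back into the constraint gives $A_{u}D^{-1}A_{u}^{\tran}\vec{\mu}=\vec{b}_{\node}$.

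Next I would read off the block structure of the Laplacian. Writing $A=\bigl(\begin{smallmatrix}A_{l}\\ A_{u}\end{smallmatrix}\bigr)$, the matrix $L=AD^{-1}A^{\tran}$ has blocks $L_{uu}=A_{u}D^{-1}A_{u}^{\tran}$ and $L_{ul}=A_{u}D^{-1}A_{l}^{\tran}=L_{lu}^{\tran}$, so the multiplier equation is exactly $L_{uu}\vec{\mu}=\vec{b}_{\node}$. Under the standard assumption that every connected component contains a labeled node, $L_{uu}$ is positive definite, hence invertible, and $\vec{\mu}=L_{uu}^{-1}\vec{b}_{\node}$. The prediction weights then follow as $\vec{w}(\node)=A_{l}\vec{x}=A_{l}D^{-1}A_{u}^{\tran}\vec{\mu}=L_{lu}L_{uu}^{-1}\vec{b}_{\node}$, and using the symmetry of $L_{uu}$ the predicted value becomes $f(\node)=\vec{w}(\node)^{\tran}\vec{f}_{l}=\vec{b}_{\node}^{\tran}L_{uu}^{-1}L_{ul}\vec{f}_{l}$.

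Finally I would recall that the Harmonic Functions prediction is $\vec{f}_{u}=-L_{uu}^{-1}L_{ul}\vec{f}_{l}$, the component form of the harmonic condition $(L\vec{f})_{u}=\vec{0}$. Because $\vec{b}_{\node}$ equals $-1$ in the $\node$-th unlabeled slot and $0$ elsewhere, left-multiplication by $\vec{b}_{\node}^{\tran}$ extracts minus the $\node$-th entry, giving $f(\node)=-(L_{uu}^{-1}L_{ul}\vec{f}_{l})_{\node}=(\vec{f}_{u})_{\node}$, which is precisely the HF value; as this holds for every unlabeled $\node$, the two schemes coincide. I expect the main obstacle to be bookkeeping rather than depth: aligning the sign convention of $\vec{b}_{\node}$ with the $-L_{uu}^{-1}L_{ul}$ appearing in the HF formula, and justifying the invertibility of $L_{uu}$ via the connectivity assumption, while the block identity $A_{u}D^{-1}A_{u}^{\tran}=L_{uu}$ and the remaining manipulations are routine.
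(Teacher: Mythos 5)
Your proposal is correct and follows essentially the same route as the paper: both solve the $\lambda=0$ problem as an equality-constrained quadratic program in closed form (the paper cites the standard formula from Boyd et al., you derive it explicitly via Lagrange multipliers, which is the same computation), identify $A_{u}D^{-1}A_{u}^{\tran}=L_{uu}$ and $A_{u}D^{-1}A_{l}^{\tran}=L_{ul}$, and match the result against $\vec{f}_{u}=-L_{uu}^{-1}L_{ul}\vec{f}_{l}$. Your explicit justification of the invertibility of $L_{uu}$ via the connectivity assumption is a small point the paper leaves implicit, but the argument is otherwise the same.
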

\begin{proof}
The HF method uses the Laplacian matrix $L$ and constructs predictions
by optimizing $\min_{\vec{f}\in\mathbb{R}^{n}}\sum_{(i,j)\in E}\,-L_{ij}(f_{i}-f_{j})^{2}$
subject to reproducing the values at labeled nodes, $f_{i}=f(i)$
for $i\in\{1,2,...,n_{l}\}$. By considering the partitions of the
Laplacian along labeled and unlabeled nodes, we can write the solution
of the HF method as $\vec{f}_{u}=-L_{uu}^{-1}L_{ul}\vec{f}_{l}$,
compare to Eq. (5) in \cite{ZhuGahramani}.

When $\lambda=0$, the flow optimization problem Eq. (\ref{eq:UndirectedOptimization})
is quadratic with linear constraints, and so can be carried out in
a closed form (see e.g. Section 4.2.5 of \cite{DBLP:journals/ftml/BoydPCPE11}).
The optimal flow vector is given by $\vec{x}=D^{-1}A_{u}^{\tran}(A_{u}D^{-1}A_{u}^{\tran})^{-1}\vec{b}_{\node}$,
and the weights are computed as $\vec{w}(\node)=A_{l}\vec{x}=A_{l}D^{-1}A_{u}^{\tran}(A_{u}D^{-1}A_{u}^{\tran})^{-1}\vec{b}_{\node}$.
When we plug these weights into the prediction formula (\ref{eq:InterpolationScheme}),
we obtain the predicted value at $\node$ as $f(\node)=(\vec{w}(\node))^{\tran}\vec{f}_{l}=\vec{b}_{\node}^{\tran}(A_{u}D^{-1}A_{u}^{\tran})^{-1}A_{u}D^{-1}A_{l}^{\tran}\vec{f}_{l}.$
Since $\vec{b}_{\node}$ is zero except at the position corresponding
to $\node$ where it is $-1$ (this is the reason for the negative
sign below), we can put together the formulas for all separate $\node$
into a single expression 
\[
\vec{f}_{u}=-(A_{u}D^{-1}A_{u}^{\tran})^{-1}A_{u}D^{-1}A_{l}^{\tran}\vec{f}_{l}.
\]
By using the Laplacian $L=AD^{-1}A^{\tran}$ and considering its partitions
along labeled and unlabeled nodes, we can re-write the formula as
$\vec{f}_{u}=-L_{uu}^{-1}L_{ul}\vec{f}_{l}$, giving the same solution
as the HF method.\end{proof}
\begin{rem*}
The converse is true as well: for any Laplacian $L$ built using non-negative
edge weights (recall that off-diagonal entries of such Laplacians
are non-positive), the same predictions as HF can be obtained via
our approach at $\lambda=0$ with appropriate costs $d_{e}$. When
the Laplacian matrix $L$ is symmetric, this easily follows from Proposition
\ref{prop:equivalence} by simply setting $d_{ij}=-1/L_{ij}$ for
all $(i,j)\in E$. However, when $L$ is not symmetric (e.g. after
Markov normalization), we can still match the HF prediction results
by setting $d_{ij}=-2/(L_{ij}+L_{ji})$ for all $(i,j)\in E$. The
main observation is that while $L$ may be asymmetric, it can always
be symmetrized without inducing any change in the optimization objective
of the HF method. Indeed the objective $\sum_{(i,j)\in E}\,-L_{ij}(f_{i}-f_{j})^{2}$
is invariant to setting $L_{ij}\leftarrow(L_{ij}+L_{ji})/2$. Now
using Proposition \ref{prop:equivalence}, we see that by letting
$d_{ij}=-2/(L_{ij}+L_{ji})$, our framework with $\lambda=0$ reproduces
the same predictions as the HF method.\end{rem*}
\begin{prop}
When one formally sets $\lambda=\infty$, the flow based prediction
scheme is equivalent to the 1-nearest neighbor prediction.\end{prop}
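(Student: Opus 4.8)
The plan is to make sense of the formal limit $\lambda=\infty$ by a rescaling argument, recognize the resulting problem as a minimum-cost (shortest-path) flow, and then read the prediction weights off its optimal solution. Dividing the objective in Eq. (\ref{eq:UndirectedOptimization}) by $\lambda$ turns it into $\tfrac{1}{2}\sum_{e}d_{e}(\lambda^{-1}x_{e}^{2}+|x_{e}|)$, whose quadratic part is suppressed as $\lambda\to\infty$. A standard argument then shows that the minimizers $\vec{x}_{\lambda}$ of Eq. (\ref{eq:UndirectedOptimization}) accumulate on minimizers of the limiting problem
\[
\min_{\vec{x}\in\mathbb{R}^{m}}\ \sum_{e=1}^{m} d_{e}\,|x_{e}| \quad\text{subject to}\quad A_{u}\vec{x}=\vec{b}_{\node}.
\]
Since the feasible set is a nonempty affine subspace and $\sum_{e}d_{e}|x_{e}|$ is a weighted $\ell_{1}$-norm (hence coercive, as all $d_{e}>0$), this limiting problem attains a finite optimum.

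Next I would interpret the limiting problem combinatorially. Because $x_{e}$ may take either sign and the cost depends only on $|x_{e}|$, each edge can carry flow in either direction at per-unit cost $d_{e}$; the constraint $A_{u}\vec{x}=\vec{b}_{\node}$ enforces conservation at every unlabeled node except $\node$, which absorbs a unit of in-flow, while the labeled nodes act as free sources. This is exactly a minimum-cost unit-flow problem in the metric with edge lengths $d_{e}$. Its optimal value equals $\min_{i\le n_{l}}\mathrm{dist}(i,\node)$, the shortest-path distance from $\node$ to the nearest labeled node; this can be verified either by LP duality (pin the dual potentials to $0$ on labeled nodes, impose $1$-Lipschitzness with respect to $d$, and maximize the potential drop into $\node$) or directly by path decomposition.

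The path-decomposition view also delivers the weights. Any flow-carrying cycle can be deleted to strictly lower the $\ell_{1}$-cost since $d_{e}>0$, so the optimal flow is acyclic and decomposes into source-to-sink paths $P_{k}$ carrying positive amounts $f_{k}$ with $\sum_{k}f_{k}=1$. Its cost is $\sum_{k}f_{k}\,\mathrm{length}(P_{k})\ge \big(\min_{i}\mathrm{dist}(i,\node)\big)\sum_{k}f_{k}$, with equality only if every path used is a shortest path issuing from a nearest labeled node. Summing the conservation relations (the columns of $A$ sum to zero) confirms $\sum_{i}w_{i}(\node)=1$, and when the nearest labeled node $i^{*}$ is unique all flow originates there, giving $w_{i^{*}}(\node)=1$ and $w_{i}(\node)=0$ otherwise. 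Substituting into Eq. (\ref{eq:InterpolationScheme}) yields $f(\node)=f(i^{*})$, the $1$-nearest-neighbor prediction.

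The main obstacle is the limit itself: justifying that $\vec{x}_{\lambda}$ converges and, in particular, handling the degeneracies where the nearest labeled node or the shortest path to it is not unique. Fortunately these do not affect the conclusion. If $i^{*}$ is unique but several shortest paths exist, the quadratic term merely selects how the unit splits among them, yet every minimizer of the limiting $\ell_{1}$ problem still draws its entire unit from $i^{*}$, so $w_{i^{*}}(\node)=1$ regardless; genuine ties among equidistant labeled nodes are precisely the case in which the $1$-nearest-neighbor rule is itself ambiguous, so no stronger statement is expected.
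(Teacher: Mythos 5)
Your proof is correct and takes essentially the same route as the paper's: interpret the $\lambda\to\infty$ limit as a minimum-cost ($\ell_{1}$) flow problem whose optimal flows travel along shortest paths (with respect to the costs $d_{e}$) issuing from the nearest labeled node, so that the prediction weight concentrates entirely on $i^{*}$. The paper's own proof merely asserts this limiting behavior in three sentences; your rescaling/accumulation-point argument, the path-decomposition lower bound, and the explicit treatment of ties supply the rigor that the paper omits.
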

\begin{proof}
In this setting, the first-order term dominates the cost, and the
flow converges to a unit flow along the shortest path (with respect
to costs $d_{e}$) from the closest labeled node, say $i^{*}(p)$,
to node $p$. We can then easily see that the resulting prediction
weights are all zero, except $w_{i^{*}(p)}(p)=1$. Thus, the prediction
scheme (\ref{eq:InterpolationScheme}) becomes equivalent to the 1-nearest
neighbor prediction. 
\end{proof}

\paragraph*{Acyclicity of Flow Subgraphs}

Recall that the optimal solution induces a preferred orientation on
subgraph $G_{\node}(\lambda)$; this orientation renders the optimal
flow positive on $G_{\node}(\lambda)$ and zero on the rest of the
underlying graph. 
\begin{prop}
\label{prop:DAG}For all $\lambda\geq0$, the oriented flow subgraph
$G_{\node}(\lambda)$ is acyclic.\end{prop}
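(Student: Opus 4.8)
The plan is to argue by contradiction: I assume that the oriented subgraph $G_{\node}(\lambda)$ contains a directed cycle and then exhibit a feasible perturbation of the optimal flow that strictly lowers the objective, contradicting the uniqueness of the minimizer noted after Eq.~(\ref{eq:UndirectedOptimization}).

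First I would encode the hypothetical directed cycle as a circulation. Let $C$ be a directed cycle in the reoriented subgraph; every edge $e \in C$ carries nonzero optimal flow $x_e$, and recall the reorientation was defined precisely so that traversing $C$ in its cycle direction agrees with the sign of $x_e$. I define $\vec{z} \in \mathbb{R}^{m}$ by $z_e = \mathrm{sign}(x_e)$ for $e \in C$ and $z_e = 0$ otherwise, so that $\vec{z}$ is the signed indicator, in the original arbitrary orientation, of the cycle. The crucial structural fact is that a cycle carries no net divergence at any node---at every vertex of $C$ exactly one cycle-edge enters and one leaves---so $A\vec{z} = \vec{0}$, and in particular $A_{u}\vec{z} = \vec{0}$.

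Next I would perturb along this direction. Because $A_{u}\vec{z} = \vec{0}$, the shifted flow $\vec{x}(t) := \vec{x} - t\vec{z}$ satisfies $A_{u}\vec{x}(t) = \vec{b}_{\node}$ for every $t$, hence stays feasible for Eq.~(\ref{eq:UndirectedOptimization}). Choosing $0 < t < \min_{e \in C}|x_e|$ keeps each cycle-edge flow nonzero and of unchanged sign, so on $C$ we have $|x_e - t z_e| = |x_e| - t$; consequently the map $t \mapsto \tfrac{1}{2}\sum_{e} d_e((x_e - t z_e)^{2} + \lambda|x_e - t z_e|)$ is differentiable near $t = 0$ and never encounters the nonsmooth point of $|\cdot|$. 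Computing its derivative at $t = 0$ (edges outside $C$ contribute nothing since $z_e = 0$) yields $\tfrac{1}{2}\sum_{e \in C} d_e(-2|x_e| - \lambda)$, which is strictly negative because $d_e > 0$, $|x_e| > 0$, and $\lambda \geq 0$. Thus the objective strictly decreases for small $t > 0$, contradicting the optimality of $\vec{x}$; hence no directed cycle exists and $G_{\node}(\lambda)$ is acyclic.

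I expect the only delicate point to be the feasibility of the perturbation, namely verifying that the signed cycle indicator $\vec{z}$ lies in the kernel of $A_{u}$; this follows from the zero-divergence property of circulations and, importantly, holds regardless of the arbitrary initial orientation, since flipping an edge's orientation flips both the corresponding column of $A$ and the sign of $z_e$. The strict decrease then comes essentially for free because every support edge carries strictly positive flow magnitude, so both the quadratic term and the $\ell_{1}$-like term contribute negatively along the cycle direction; in particular the argument also covers the limiting case $\lambda = 0$.
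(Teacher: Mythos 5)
Your proof is correct and takes essentially the same route as the paper's: both argue by contradiction, subtracting a circulation along the hypothetical directed cycle (feasible since circulations lie in the kernel of $A_{u}$) to obtain a feasible flow of strictly lower cost. The only difference is cosmetic---the paper subtracts a finite circulation of magnitude $x_{0}=\min_{e\in C}|x_{e}|$ and appeals to monotonicity of the objective in $|x_{e}|$, whereas you use an infinitesimal perturbation $t\vec{z}$ and a derivative computation, which spells out the same strict decrease in more detail.
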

\begin{proof}
Suppose that $G_{\node}(\lambda)$ has a cycle of the form $k_{1}\rightarrow...\rightarrow k_{r}\to k_{1}$.
The optimal flow $\vec{x}$ is positive along all of the edges of
$G_{\node}(\lambda)$, including the edges in this cycle; let $x_{0}>0$
be the smallest of the flow values on the cycle. Let $\vec{x}_{\mathrm{sub}}$
be the flow vector corresponding to the flow along this cycle with
constant flow value of $x_{0}$. Note that $\vec{x}-\vec{x}_{\mathrm{sub}}$
is a feasible flow, and it has strictly lower cost than $\vec{x}$
because the optimization objective is diminished by a decrease in
the components of $\vec{x}$. This contradiction proves acyclicity
of $G_{\node}(\lambda)$. 
\end{proof}

\paragraph*{Maximum Principle}

It is well-known that harmonic functions satisfy the maximum principle.
Here we provide its derivation in terms of the flows, showing that
the maximum principle holds in our framework for all settings of the
parameter $\lambda$. 
\begin{prop}
\label{prop:convexity}For all $\lambda\geq0$ and for all $\node$,
we have $\forall i,w_{i}(\node)\geq0$ and $\sum_{i=1}^{n_{l}}w_{i}(\node)=1$. \end{prop}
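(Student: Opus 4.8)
The plan is to establish the two assertions separately, since they have quite different natures: the affinity condition $\sum_{i} w_i(\node)=1$ follows from flow conservation alone and uses only feasibility, whereas non-negativity $w_i(\node)\ge 0$ genuinely requires optimality together with the acyclicity result of Proposition \ref{prop:DAG}.

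For the affinity condition, I would exploit the defining property of the signed incidence matrix, namely that every column of $A$ contains exactly one $+1$ and one $-1$, so that $\vec{1}_n^{\tran}A=\vec{0}$, where $\vec{1}_n$ denotes the all-ones vector on $V$. Partitioning this identity along labeled and unlabeled nodes gives $\vec{1}_l^{\tran}A_l=-\vec{1}_u^{\tran}A_u$. Since the weight vector is $\vec{w}(\node)=A_l\vec{x}$ and the optimal flow satisfies $A_u\vec{x}=\vec{b}_{\node}$, I compute
\[
\sum_{i=1}^{n_l} w_i(\node)=\vec{1}_l^{\tran}A_l\vec{x}=-\vec{1}_u^{\tran}A_u\vec{x}=-\vec{1}_u^{\tran}\vec{b}_{\node}=1,
\]
where the last step uses that $\vec{b}_{\node}$ equals $-1$ in the entry for $\node$ and $0$ elsewhere.

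For non-negativity, I would argue by contradiction via an exchange argument on the oriented subgraph $G_{\node}(\lambda)$. Interpreting $w_i(\node)=(A_l\vec{x})_i$ as the net out-flow (divergence) at labeled node $i$, I note that conservation forces the divergence to vanish at every unlabeled node except $\node$, where it equals $-1$; hence the only nodes of positive divergence are labeled nodes of positive weight. Suppose, toward a contradiction, that some labeled node $j$ has $w_j(\node)<0$. By Proposition \ref{prop:DAG} the oriented flow subgraph is a DAG carrying a strictly positive flow, so this non-negative acyclic flow decomposes into a sum of directed source-to-sink paths. Because $j$ has negative divergence it is a net sink, so at least one decomposition path $P$ terminates at $j$ and carries flow $\delta>0$; its source endpoint, being a node of positive divergence, must be a labeled node $i$ with $w_i(\node)>0$.

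I would then close the argument by subtracting $\delta$ units of flow along $P$. This modification leaves the divergence unchanged at every interior node of $P$ and at $\node$ (which, if it lies on $P$ at all, is necessarily interior, since $P$ ends at $j\neq\node$), so the constraint $A_u\vec{x}=\vec{b}_{\node}$ is preserved. Meanwhile, since the flow is positive on all edges of $P$ and each $d_e>0$, reducing these flows strictly decreases both the quadratic and the $\ell_1$-like terms of the objective, contradicting optimality; thus $w_i(\node)\ge 0$ for all $i$, and together with the affinity condition this exhibits $f(\node)$ as a convex combination of the labeled values. The hard part will be the bookkeeping in this final step: one must verify that the source endpoint of the removed path is genuinely a labeled node (guaranteed by the divergence analysis, since $\node$ and the remaining unlabeled nodes cannot be sources) and that the removal respects the sink constraint at $\node$ (guaranteed because $\node$ can only appear as a sink, never a source, of the decomposition).
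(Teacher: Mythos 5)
Your proof is correct and follows essentially the same route as the paper's: the sum-to-one identity via $\vec{1}^{\tran}A=\vec{0}$ partitioned over labeled and unlabeled blocks is identical, and non-negativity is the same contradiction argument that removes a positive-flow path terminating at the offending labeled node, making the objective strictly smaller while preserving feasibility. The only difference is that where the paper informally asserts the existence of such a path (``since all of the flow originates from labeled nodes''), you justify it rigorously via path decomposition of the acyclic non-negative flow using Proposition \ref{prop:DAG}, which is a minor strengthening of the same argument rather than a different approach.
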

\begin{proof}
Non-negativity of weights holds because a flow having a negative out-flow
at a labeled node can be made less costly by removing the corresponding
sub-flow. The formal proof is similar to the proof or Proposition
\ref{prop:DAG}. First, make the flow non-negative by re-orienting
the underlying graph (which was oriented arbitrarily). Now suppose
that the optimal solution $\vec{x}$ results in $w_{i}(\node)<0$
for some $i$, meaning that the labeled node $i$ has an in-flow of
magnitude $|w_{i}(\node)|$. Since all of the flow originates from
labeled nodes, there must exist $j\in\{1,...,n_{l}\}/\{i\}$ and a
flow-path $j\rightarrow k_{1}\rightarrow...\rightarrow k_{r}\rightarrow i$
with positive flow values along the path; let $x_{0}>0$ be the smallest
of these flow values. Let $\vec{x}_{\mathrm{sub}}$ be the flow vector
corresponding to the flow along this path with constant flow value
of $x_{0}$. Note that $\vec{x}-\vec{x}_{\mathrm{sub}}$ is a feasible
flow, and it has strictly lower cost than $\vec{x}$ because the optimization
objective is diminished by a decrease in the components of $\vec{x}$.
This contradiction proves the positivity of weights.

The weights, or equivalently the out-flows from the labeled nodes,
add up to $1$ because the destination node absorbs a total of a unit
flow. More formally, note that by definition, $\vec{1}^{\tran}A=\vec{0}$.
Splitting this along labeled and unlabeled nodes we get $\vec{1}^{\tran}A=\vec{1}_{l}^{\tran}A_{l}+\vec{1}_{u}^{\tran}A_{u}=\vec{0}$.
Right-multiplying by $\vec{x}$ gives $\vec{1}_{l}^{\tran}A_{l}\vec{x}+\vec{1}_{u}^{\tran}A_{u}\vec{x}=\vec{1}_{l}^{\tran}\vec{w}(\node)+\vec{1}_{u}^{\tran}\vec{b}_{\node}=\vec{1}_{l}^{\tran}\vec{w}(\node)-1=0$,
or $\sum_{i=1}^{n_{l}}w_{i}(\node)=1$ as desired. 
\end{proof}
This proposition together with Eq. (\ref{eq:InterpolationScheme})
guarantees that the function $f(\cdot)$ obeys the maximum principle.

\section{\label{sec:Extensions}Extensions}

\paragraph*{LR and Noisy Labels}

Laplacian Regularization has the advantage of allowing to train with
noisy labels. Here, we discuss modifications needed to reproduce LR
via our framework in the limit $\lambda=0$. This strategy allows
incorporating noisy training labels into the flow framework for all
$\lambda\geq0$.

Consider the LR objective $\mu^{-1}\sum_{i=1}^{n_{l}}(f_{i}\text{\textendash}f(i))^{2}+\sum_{(i,j)\in E}\,-L_{ij}(f_{i}-f_{j})^{2}$,
where $f(i)$ are the provided noisy labels/values and $\mu$ is the
strength of Laplacian regularization. In this formulation, we need
to learn $f_{i}$ for both labeled and unlabeled nodes. The main observation
is that the soft labeling terms $\mu^{-1}(f_{i}\lyxmathsym{\textendash}f(i))^{2}$
can be absorbed into the Laplacian by modifying the data graph. For
each labeled node $i\in\{1,2,...,n_{l}\}$, introduce a new anchor
node $i^{a}$ with a single edge connecting $i^{a}$ to node $i$
with the weight of $\mu^{-1}/2$ (halving offsets doubling in the
regularizer sum). Consider the HF learning on the modified graph,
where the labeled nodes are the anchor nodes only---i.e. optimize
the HF objective on the modified graph with hard constraints $f_{i^{a}}=f(i)$;
clearly, this is equivalent to LR. Given the relationship between
HF and the flow formulation, this modification results in a flow formulation
of LR at the limiting case of $\lambda=0$.

\paragraph*{Directed Graphs}

Being based on flows, our framework can treat directed graphs very
naturally. Indeed, since the flow can only go along the edge direction,
we have a new constraint $x_{e}\geq0$, giving the optimization problem:
\begin{equation}
\min_{\vec{x}\in\mathbb{R}_{+}^{m}}\frac{1}{2}\sum_{e=1}^{m}d_{e}(x_{e}^{2}+\lambda x_{e})\quad\mbox{subject to}\;A_{u}\vec{x}=\vec{b}_{\node},\label{eq:DirectedOptimizationProblem}
\end{equation}
Even when $\lambda=0$, this scheme is novel---due to the non-negativity
constraint the equivalence to HF no longer holds. The naturalness
of our formulation is remarkable when compared to the existing approaches
to directed graphs that use different graph normalizations \cite{DBLP:conf/nips/ZhouSH04,DBLP:conf/icml/ZhouHS05},
co-linkage analysis \cite{directed_colincage}, or asymmetric dissimilarity
measures \cite{subramanya11}.

This formulation can also handle graphs with asymmetric weights. Namely,
one may have different costs for going forward and backward: the cost
of the edge $i\rightarrow j$ can differ from that of $j\rightarrow i$.
If needed, such cost differentiation can be applied to undirected
graphs by doubling each edge into forward and backward versions. In
addition, it may be useful to add opposing edges (with higher costs)
into directed graphs to make sure that the flow problem is feasible
even if the underlying graph is not strongly connected.

\section{\label{sec:Computation}Computation}

The flow optimization problems discussed in Section \ref{sec:Extensions}
can be solved by the existing general convex or convex quadratic solvers.
However, general purpose solvers cannot use the shared structure of
the problem---namely that everything except the vector $\vec{b}_{\node}$
is fixed. Here, we propose solvers based on the Alternating Direction
Method of Multipliers (ADMM) \cite{DBLP:journals/ftml/BoydPCPE11}
that allow caching the Cholesky factorization of a relevant matrix
and reusing it in all of the iterations for all unlabeled nodes. We
concentrate on the undirected version because of the space limitations.
\begin{figure*}
\begin{centering}
\includegraphics[width=1\textwidth]{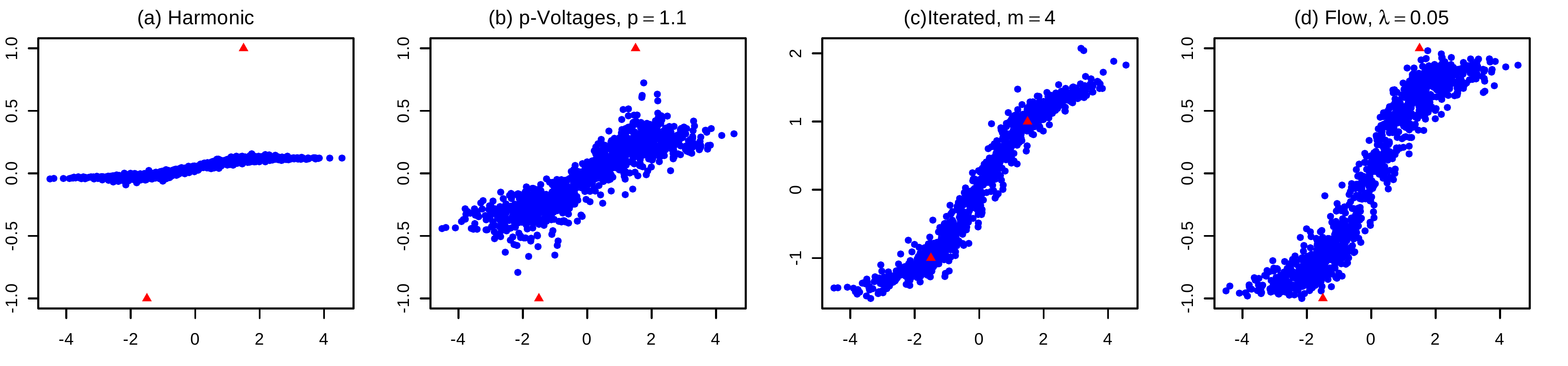} 
\par\end{centering}

\caption{\label{fig:Gaussian-mixture}$f(\cdot)$ for a mixture of two Gaussians
in $\mathbb{R}^{10}$ for a number of methods.}
\end{figure*}

In the ADMM form, the problem (\ref{eq:UndirectedOptimization}) can
be written as 
\begin{align*}
\min_{\vec{x},\vec{z}\in\mathbb{R}^{m}}\;\, & g(\vec{x})+\frac{1}{2}\sum_{e\in E}d_{e}x_{e}^{2}+\lambda\sum_{e\in E}d_{e}|z_{e}|\\
\mbox{subj to}\;\, & \vec{x}-\vec{z}=\vec{0},
\end{align*}
where $g(x)$ is the $0/\infty$ indicator function of the set $\{\vec{x}|A_{u}\vec{x}=\vec{b}_{\node}\}$.
Stacking the dissimilarities $d_{e}$ into a column-vector \textbf{d},
and defining the diagonal matrix $D=\mathrm{diag}(\vec{d})$, the
ADMM algorithm then consists of iterations: 
\begin{align*}
\vec{x}^{k+1}:= & \arg\min_{\vec{x}}\:g(\vec{x})+\frac{1}{2}\vec{x}^{\tran}D\vec{x}+\frac{\rho}{2}\Vert\vec{x}-\vec{z}^{k}+\vec{u}^{k}\Vert_{2}^{2}\\
\vec{z}^{k+1}:= & S_{\vec{d}\lambda/\rho}(\vec{x}^{k+1}+\vec{u}^{k})\\
\vec{u}^{k+1}:= & \vec{u}^{k}+\vec{x}^{k+1}-\vec{z}^{k+1}
\end{align*}
Here, $S_{\vec{a}}(\vec{b})=(\vec{b}-\vec{a})_{+}-(-\vec{b}-\vec{a})_{+}$
is the component-wise soft-thresholding function.

The $\vec{x}$-iteration can be computed in a closed form as the solution
of an equality constrained quadratic program, cf. Section 4.2.5 of
\cite{DBLP:journals/ftml/BoydPCPE11}. Letting $M=\mathrm{diag}(1/(\rho+\vec{d}))$,
we first solve the linear system $A_{u}MA_{u}^{\tran}\vec{y}=2\rho A_{u}M(\vec{z}^{k}-\vec{u}^{k})-\vec{b}_{\node}$
for $\vec{y}$, and then let $\vec{x}^{k+1}:=2\rho M(\vec{z}^{k}-\vec{u}^{k})-MA_{u}^{\mathsf{T}}\vec{y}$.
The most expensive step is the involved linear solve. Fortunately,
the matrix $A_{u}MA_{u}^{\tran}$ is both sparse (same fill as the
graph Laplacian) and positive-definite. Thus, we compute its Cholesky
factorization and use it throughout all iterations and for all unlabeled
nodes, since $A_{u}MA_{u}^{\tran}$ contains only fixed quantities. 

We initialize the iterations by setting $\vec{z}^{0}=\vec{u}^{0}=\vec{0}$,
and declare convergence when both $\max(\vec{x}^{k+1}-\vec{z}^{k+1})$
and $\max(\vec{z}^{k+1}-\vec{z}^{k})$ fall below a pre-set threshold.
As explained in Section 3.3 of \cite{DBLP:journals/ftml/BoydPCPE11},
these quantities are related to primal and dual feasibilities. Upon
convergence, the prediction weights are computed using the formula
$\vec{w}(\node)=A_{l}\vec{z}$; thanks to soft-thresholding, using
$\vec{z}$ instead of $\vec{x}$ avoids having a multitude of negligible
non-zero entries.

\section{\label{sec:Experiments}Experiments}

In this section, we validate the proposed framework and then showcase
its interpretability aspect.

\subsection{Validation}

We experimentally validate a number of claims made about the flow
based approach: 1) limiting the spread of the flow via sparsity term
results in improved behavior in high-dimensional settings; 2) this
improved behavior holds for real-world data sets and leads to increased
predictive accuracy over HF; 3) our approach does not fall far behind
the state-of-the-art in terms of accuracy. We also exemplify the directed/asymmetric
formulation on a synthetic example.

We compare the flow framework to the original HF formulation and also
to two improvements of HF designed to overcome the problems encountered
in high-dimensional settings: \textit{$p$-Voltages}---the method
advocated in Alamgir and von Luxburg \cite{DBLP:conf/nips/AlamgirL11}
(they call it $q$-Laplacian regularization) and further studied in
\cite{pvoltages,DBLP:conf/colt/Alaoui16}; \textit{Iterated Laplacians}---a
state-of-the-art method proposed in \cite{DBLP:journals/jmlr/ZhouB11}.

For data clouds we construct the underlying graph as a weighted $20$-nearest
neighbor graph. The edge weights are computed using Gaussian RBF with
$\sigma$ set as one-third of the mean distance between a point and
its tenth nearest neighbor \cite{ChaSchZie06}. The normalized graph
Laplacian $L$ is used for HF and Iterated Laplacian methods. The
edge costs for the flow approach are set by $d_{ij}=-2/(L_{ij}+L_{ji})$
as described in Section \ref{sec:Theoretical-Properties}. The weights
for $p$-Voltages are computed as $d_{ij}^{-1/(p-1)}$, see \cite{DBLP:conf/nips/AlamgirL11,pvoltages}.

\paragraph*{Mixture of Two Gaussians}

Consider a mixture of two Gaussians in $10$-dimensional Euclidean
space, constructed as follows. We sample $500$ points from each of
the two Gaussians with $\vec{\mu}_{1}=\vec{\mu}_{2}=\vec{0}$ and
$\Sigma_{1}=\Sigma_{2}=I$. The points from each Gaussian are respectively
shifted by $-1.5$ and $+1.5$ units along the first dimension. A
single labeled point is used for each Gaussian; the labels are $-1$
for the left Gaussian and $+1$ for the right one. According to \cite{DBLP:conf/nips/AlamgirL11,DBLP:conf/colt/Alaoui16},
an appropriate value of $p$ for $p$-Voltages is $(10+1)/10=1.1$.

In Figure \ref{fig:Gaussian-mixture}, we plot the estimated functions
$f$ for various methods. The sampled points are projected along the
first dimension, giving the $x$-axis in these plots; the $y$-axis
shows the estimator $f$. Figure \ref{fig:Gaussian-mixture} (a) confirms
the flatness of the HF solution as noted in \cite{NIPS2009_3652}.
In classification problems flatness is undesirable as the solution
can be easily shifted by a few random labeled samples to favor one
or the other class. For regression problems, the estimator $f$ is
unsuitable as it fails to be smooth, which can be seen by the discrepancy
between the values at labeled nodes (shown as red triangles) and their
surrounding.

The $p$-Voltages solution, Figure \ref{fig:Gaussian-mixture} (b),
avoids the instability issue, but is not completely satisfactory in
terms of smoothness. The iterated Laplacian and flow methods, Figure
\ref{fig:Gaussian-mixture} (c,d), suffer neither from flatness nor
instability. Note that in contrast to iterated Laplacian method, the
flow formulation satisfies the maximum principle, and $f$ stays in
the range $[-1,1]$ established by the provided labels.

\begin{table}
\centering \begin{scriptsize} %
\begin{tabular}{lllll}
\toprule 
Dataset  & Harmonic  & p-Voltages  & Iterated  & Flow \tabularnewline
\midrule 
MNIST 3vs8  & $8.5\pm2.4$  & $7.8\pm1.8$  & $6.1\pm2.3$  & $6.2\pm1.6$ \tabularnewline
MNIST 4vs9  & $22.3\pm8.2$  & $13.3\pm2.8$  & $8.5\pm2.0$  & $9.6\pm2.7$ \tabularnewline
AUT-AVN  & $27.9\pm13.7$  & $19.0\pm5.1$  & $11.5\pm2.0$  & $14.6\pm2.7$ \tabularnewline
CCAT  & $33.3\pm8.5$  & $25.4\pm3.5$  & $21.5\pm3.4$  & $22.3\pm3.2$ \tabularnewline
GCAT  & $20.0\pm11.6$  & $13.6\pm3.7$  & $9.2\pm2.6$  & $10.0\pm1.7$ \tabularnewline
PCMAC  & $30.6\pm11.2$  & $21.4\pm3.2$  & $14.1\pm1.6$  & $18.2\pm3.3$ \tabularnewline
REAL-SIM  & $31.5\pm12.1$  & $22.4\pm3.6$  & $15.3\pm2.8$  & $17.6\pm3.3$ \tabularnewline
\bottomrule
\end{tabular}\end{scriptsize} \caption{\label{tab:Misclassification-percentage}Misclassification rates (\%)
and standard deviations.}
\end{table}

\begin{table}
\centering \begin{scriptsize} \tabcolsep=0.1cm %
\begin{tabular}{llllll}
\toprule 
Dataset  & Harmonic  & $\lambda=0.025$  & $\lambda=0.05$  & $\lambda=0.1$  & $\lambda=0.2$ \tabularnewline
\midrule 
MNIST 3vs8  & $8.5\pm2.4$  & $5.8\pm1.6$  & $5.9\pm1.5$  & $6.2\pm1.4$  & $6.7\pm1.4$ \tabularnewline
MNIST 4vs9  & $22.3\pm8.2$  & $9.4\pm2.8$  & $9.3\pm2.7$  & $9.7\pm2.7$  & $10.3\pm2.7$ \tabularnewline
AUT-AVN  & $27.9\pm13.7$  & $14.0\pm2.6$  & $14.8\pm2.4$  & $16.3\pm2.3$  & $18.4\pm2.4$ \tabularnewline
CCAT  & $33.3\pm8.5$  & $21.9\pm3.0$  & $22.5\pm2.8$  & $23.2\pm2.5$  & $24.5\pm2.5$ \tabularnewline
GCAT  & $20.0\pm11.6$  & $9.9\pm1.6$  & $10.8\pm1.6$  & $12.4\pm1.8$  & $14.1\pm1.7$ \tabularnewline
PCMAC  & $30.6\pm11.2$  & $17.2\pm2.9$  & $18.0\pm2.5$  & $19.5\pm2.2$  & $21.0\pm2.2$ \tabularnewline
REAL-SIM  & $31.5\pm12.1$  & $17.1\pm3.4$  & $17.7\pm3.0$  & $19.1\pm2.7$  & $21.1\pm2.4$ \tabularnewline
\bottomrule
\end{tabular}\end{scriptsize} \caption{\label{tab:Misclassification-percentage-fixed}Misclassification rates
(\%) and standard deviations.}
\end{table}

\paragraph{Benchmark Data Sets}

Next we test the proposed method on high-dimensional image and text
datasets used in \cite{DBLP:journals/jmlr/ZhouB11}, including MNIST
3vs8, MNIST 4vs9, aut-avn, ccat, gcat, pcmac, and real-sim. For each
of these, we use a balanced subset of $1000$ samples. In each run
we use $n_{l}=50$ labeled samples; in addition, we withheld $50$
samples for validation. The misclassification rate is computed for
the remaining 900 samples. The results are averaged over $20$ runs.
For $p$-Voltages, the value of $p$ for each run is chosen from $\{1.0625,1.125,1.25,1.5,2\}$
using the best performing value on the validation samples; note that
$p=2$ corresponds to HF. For iterated Laplacian method, $m$ is chosen
from $\{1,2,4,8,16\}$ with $m=1$ being equivalent to HF. For the
flow approach, the value of $\lambda$ is chosen from $\{0,0.025,0.05,0.1,0.2\}$,
where $\lambda=0$ is equivalent to HF.

The results summarized in Table \ref{tab:Misclassification-percentage}
show that the flow approach outperforms HF and $p$-Voltages, even
with the added benefit of interpretability (which the other methods
do not have). The flow approach does have slightly lower accuracy
than the state-of-the-art iterated Laplacian method, but we believe
that this difference is a tolerable tradeoff for applications where
interpretability is required.

Next, we compare HF and the flow approach but this time instead of
selecting $\lambda$ by validation we use fixed values. The results
presented in Table \ref{tab:Misclassification-percentage-fixed} demonstrate
that the flow approach for each of the considered values consistently
outperforms the base case of HF (i.e. $\lambda=0$). Note that larger
values of $\lambda$ lead to increased misclassification rate, but
the change is not drastic. This is important for interactive exploration
because larger $\lambda$ values produce smaller flow subgraphs $G_{\node}(\lambda)$
that are easier for the human companion to grasp.

\begin{figure}
\begin{centering}
\includegraphics[width=0.91\columnwidth]{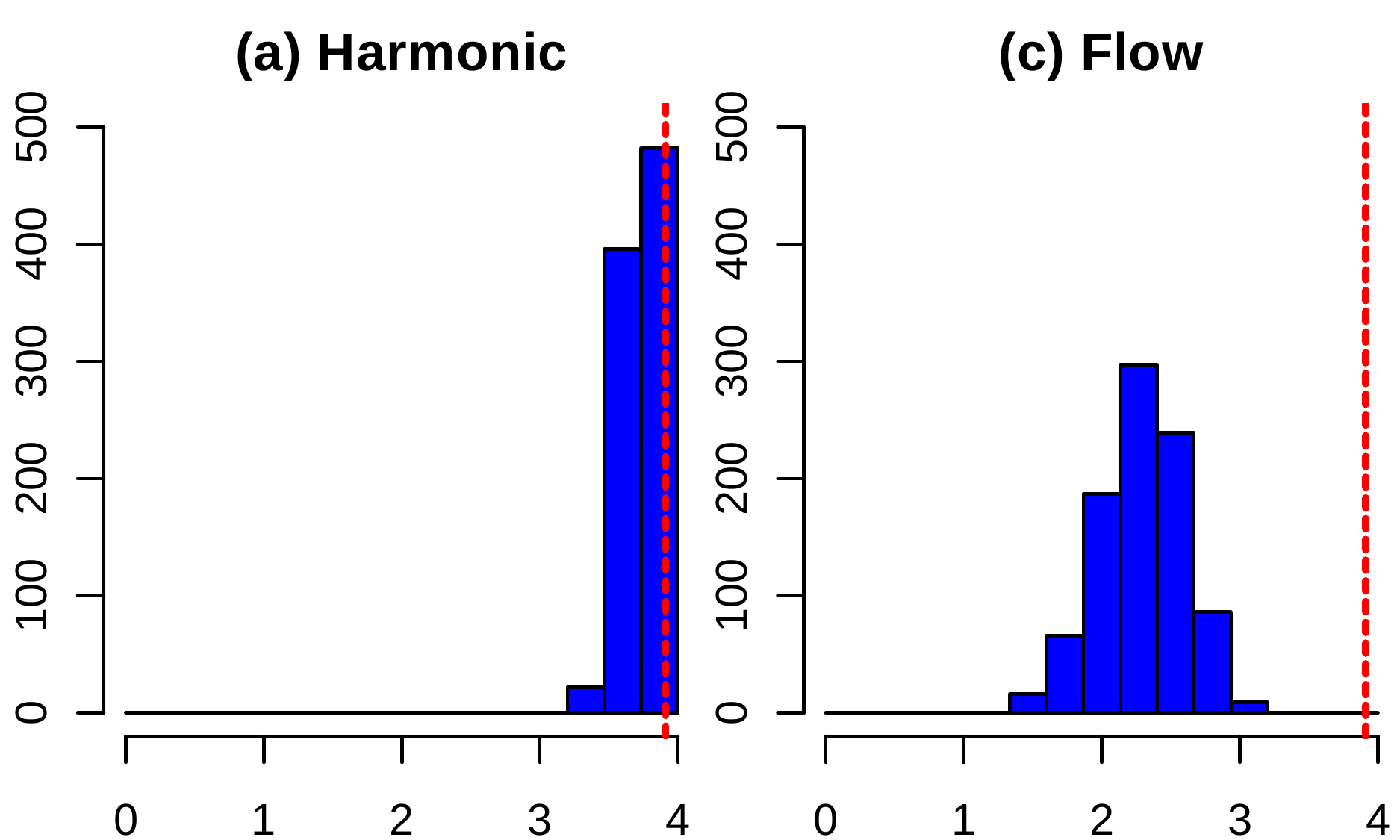} 
\par\end{centering}

\caption{\label{fig:Weight-statistics-MNIST-3vs8}Weight statistics for MNIST
3vs8.}
\end{figure}

Finally, we show that the improvement over HF observed in these experiments
relates to the flatness issue and, therefore, is brought about by
limiting the spread of the flow. By analyzing the prediction weights
in Eq. (\ref{eq:InterpolationScheme}), we can compare HF to the flow
based approach in terms of the flatness of solutions. To focus our
discussion, let us concentrate on a single run of MNIST 3vs8 classification
problem. Since the weights for the flow based and HF approaches lend
themselves to a probabilistic interpretation (by Proposition \ref{prop:convexity},
they are non-negative and add up to $1$), we can look at the entropies
of the weights, $H(\node)=-\sum_{i}w_{i}(\node)\log w_{i}(\node)$
for every unlabeled node $\node=n_{l}+1,...,n$. For a given unlabeled
$s$, the maximum entropy is achieved for uniform weights: $\forall i,w_{i}(s)=1/n_{l}$.
Figure \ref{fig:Weight-statistics-MNIST-3vs8} shows the histograms
of entropies, together with the red vertical line corresponding to
the maximum possible entropy. In contrast to the flow based approach,
the entropies from HF method are clustered closely to the red line,
demonstrating that there is little variation in HF weights, which
is a manifestation of the flatness issue.

\paragraph*{Directed/Asymmetric Graphs}

In this synthetic example, we demonstrate the change in the estimated
$f(\cdot)$ induced by the use of asymmetric edge costs in the directed
formulation given by Eq. (\ref{eq:DirectedOptimizationProblem}).
The underlying graph represents the road network of Minnesota, with
edges showing the major roads and vertices being their intersections.
As in the previous example, we pick two nodes (shown as triangles))
and label them with $\pm1$ and depict the resulting estimator $f(\cdot)$
for the remaining unlabeled nodes using the shown color-coding. In
Figure \ref{fig:EffectOfDirectionality} (a), the graph is undirected,
i.e. the edge costs are symmetric. In Figure \ref{fig:EffectOfDirectionality}
(b), we consider the directed graph obtained by doubling each edge
into forward and backward versions. This time the costs for edges
going from south to north ($\mbox{lat}_{i}<\mbox{lat}_{j}$) are multiplied
by four. The latter setting makes flows originating at the top labeled
node cheaper to travel towards south, and therefore shifts the ``decision
boundary'' to the south.

\begin{figure}
\begin{centering}
\includegraphics[width=0.85\columnwidth]{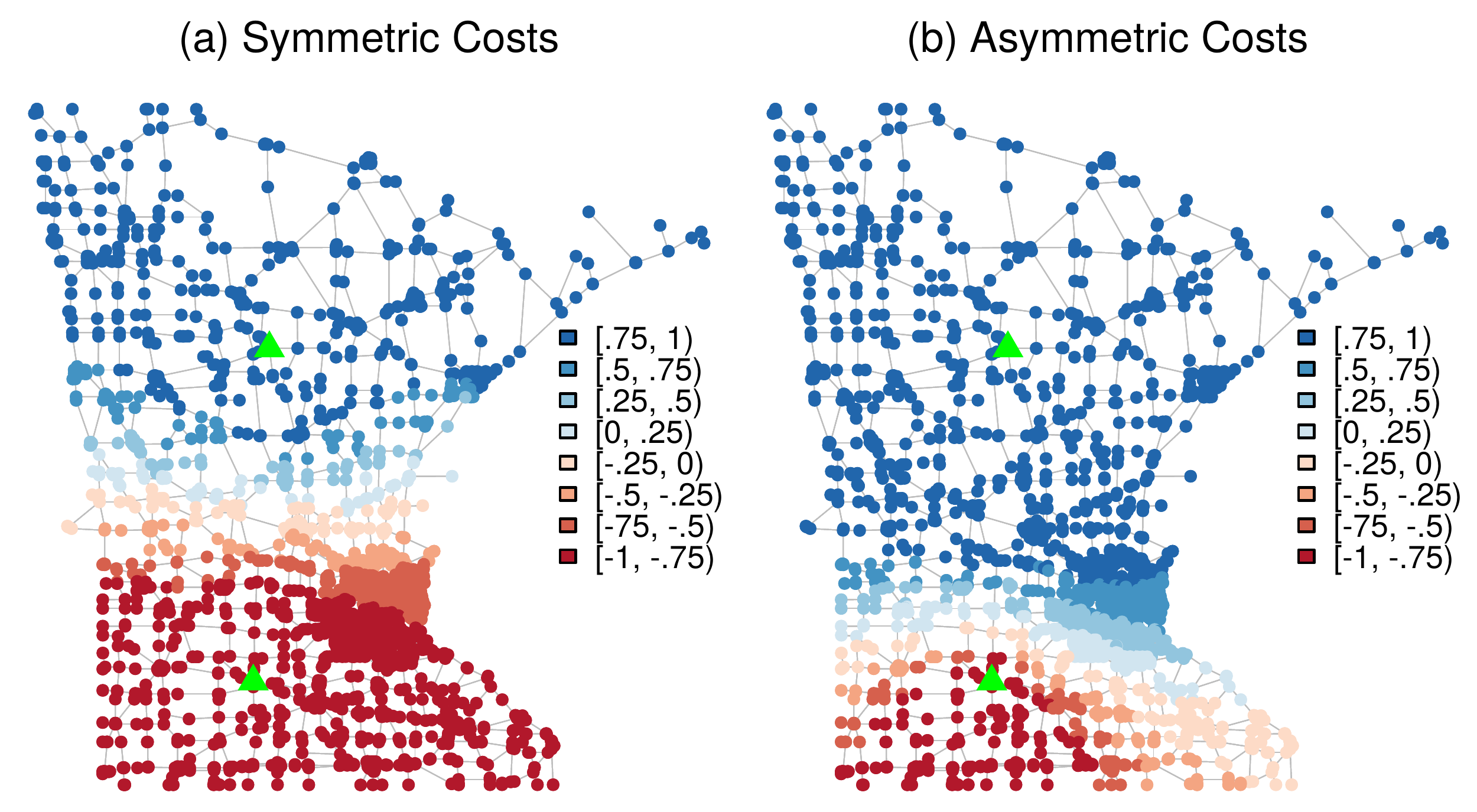} 
\par\end{centering}

\caption{\label{fig:EffectOfDirectionality}$f(\cdot)$ for symmetric and asymmetric
edge costs.}
\end{figure}

\subsection{Interpretability}

As stated previously, one of the core benefits of the proposed framework
is how it provides a detailed, transparent, and easily understood
expression of the learning process. To exemplify this aspect of the
proposed framework, we focus our discussion on a fixed data-point
$\node$ from MNIST 3vs8 and study the corresponding flow subgraphs
$G_{\node}(\lambda)$ for $\lambda=0.2,0.1$ and $0.05$; see Figure
\ref{fig:Flow-subgraphs} (also see supplementary examples). The digit
image for unlabeled node $\node$ is outlined in red; the labeled
node images are outlined in blue. The flows along the edges are in
percents, and have been rounded up to avoid the clutter.

As expected, the size of the subgraph depends on the parameter $\lambda$.
Recall that for each $\lambda>0$, the flow optimization automatically
selects a subset of edges carrying non-zero flow---this is akin to
variable selection in sparse regression \cite{ElementsStatisticalLearning}.
The figure confirms that the larger the strength of the sparsity penalty,
the fewer edges get selected, and the smaller is the resulting subgraph.
The amount of reduction in size is substantial: the underlying graph
for this experiment has 1K nodes and \textasciitilde{}15K edges.

As claimed, the acyclicity of the flow subgraphs $G_{\node}(\lambda)$
leads to more intuitive layouts. We used the ``dot'' filter from
Graphviz \cite{graphviz_dot,Gansner:2000:OGV:358668.358697} which
is targeted towards visualizing hierarchical directed graphs. Since
our flow subgraphs are DAGs, the ``dot'' filter gave satisfactory
layouts without any manual tweaking. Indeed, all of the visualizations
depicted in Figure \ref{fig:Flow-subgraphs} provide an overall sense
of directionality, here from top to bottom: due to acyclicity there
are no edges going ``backward''. This makes it easy to trace the
flow from sources, the labeled nodes, to the sink, the unlabeled node
$\node$.

Of course, the visualizations in Figure \ref{fig:Flow-subgraphs}
benefit from the fact that the data points are digits, which allows
including their images in lieu of abstract graph nodes. The same approach
could be used for general image data sets as well. For other data
types such as text, the nodes could depict some visual summary of
the data point, and then provide a more detailed summary upon a mouse
hover. 
\begin{figure}
\begin{centering}
\includegraphics[width=0.85\columnwidth]{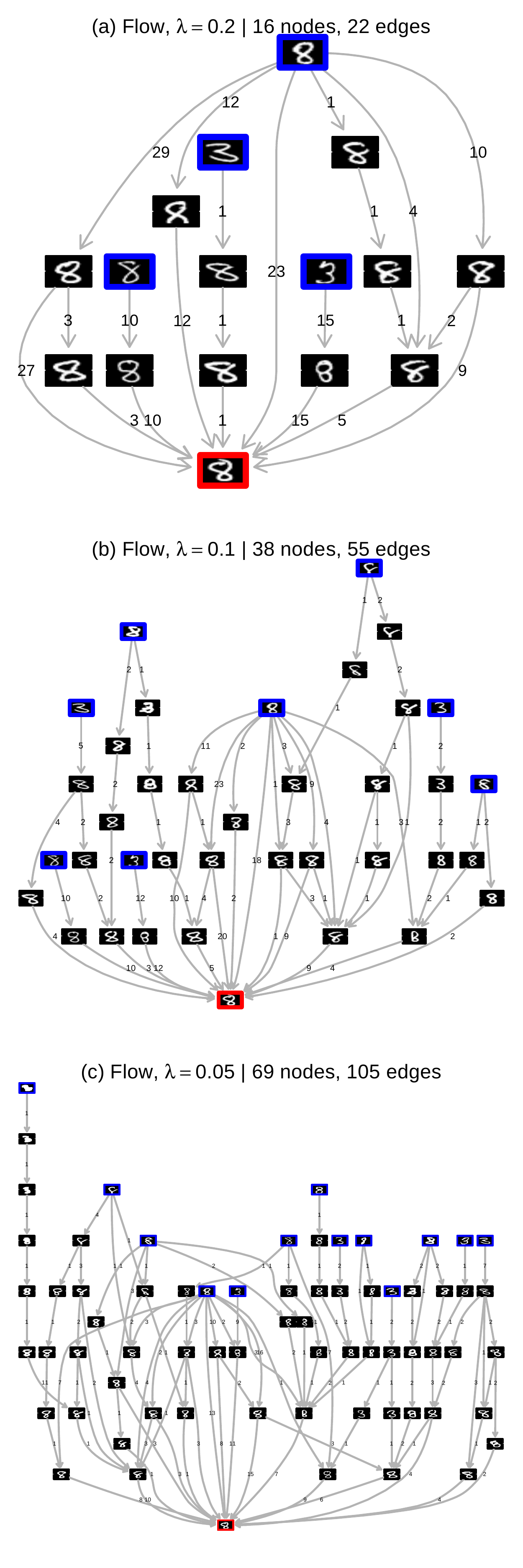} 
\par\end{centering}

\caption{\label{fig:Flow-subgraphs}Flow subgraphs $G_{\node}(\lambda)$ showing
the information propagation from the labeled nodes (outlined in blue)
to the unlabeled node $\node$ (outlined in red). The flow values
on the edges are in percents, and may fail to add up due to rounding.}
\end{figure}

\section{\label{sec:Discussion}Discussion and Future Work}

We have presented a novel framework for graph-based semi-supervised
learning that provides a transparent and easily understood expression
of the learning process via the language of flows. This facilitates
accountability and trust and makes a step towards the goal of improving
human-computer collaboration.

Our work is inspired by \cite{DBLP:conf/nips/AlamgirL11} which pioneered
the use of the flow interpretation of the standard resistance distances
(c.f. \cite{Bollobas1998}) to introduce a novel class of resistance
distances that avoid the pitfalls of the standard resistance distance
by concentrating the flow on fewer paths. They also proved an interesting
phase transition behavior that led to specific suggestions for Laplacian-based
semi-supervised learning. However, their proposal for semi-supervised
learning is not based on flows, but rather on an appropriate setting
of the parameter in the $p$-Voltages method (which they call $q$-Laplacian
regularization); it was further studied in \cite{pvoltages,DBLP:conf/colt/Alaoui16}.
Although it is already apparent from the experimental results that
our proposal is distinct from $p$-Voltages, we stress that there
is a fundamental theoretical difference: $p$-Voltages method is non-linear
and so cannot be expressed via Eq. (\ref{eq:InterpolationScheme}),
except when $p=2$. At a deeper level, these methods regularize the
estimated function via its value discrepancies at adjacent nodes,
and, therefore, do not directly provide a detailed understanding of
how the values propagate along the graph.

One immediate direction for future work is to obtain a flow formulation
for the iterated Laplacian method which gave best accuracy on the
benchmark data sets. This may seem straightforward to do as iterated
Laplacian method basically replaces the Laplacian operator $L$ in
the regularizer by its power $L^{m}$. However, the matrix $L^{m}$
contains both positive and negative off-diagonal entries, and so,
the corresponding edge costs are no longer positive, which renders
the flow interpretation problematic. An indirect manifestation of
this issue is the lack of a maximum principle for the iterated Laplacian
method. Another complicating factor is that the stencils of $L^{m}$
grow very fast with $m$, e.g. $L^{4}$ is nearly a full matrix in
the benchmark examples.

Another interesting future work direction is to explore approaches
for generating multiple levels of explanations from the flow graph.
For example, main paths of the information flow could be identified
via graph summarization techniques and presented as an initial coarse
summary, either in pictorial or textual form. As the user drills down
on particular pathways, more detailed views could be generated.

Additional avenues for future study include algorithmic improvements
and applications in other areas. The locality of the resulting weights
can be used to devise faster algorithms that operate directly on an
appropriate neighborhood of the given unlabeled node. Studying the
sparsity parameter $\lambda$ for phase transition behavior can provide
guidance for its optimal choice. In another direction, since the flow-based
weights are non-negative and add up to 1, they are suitable for semi-supervised
learning of probability distributions \cite{icml2014c1_solomon14}.
Finally, the flow-based weights can be useful in different areas,
such as descriptor based graph matching or shape correspondence in
computer vision and graphics.

\paragraph*{Acknowledgments: }

We thank Eleftherios Koutsofios for providing expertise on graph visualization
and helping with Graphviz. We are grateful to Simon Urbanek for computational
support, and Cheuk Yiu Ip and Lauro Lins for help with generating
visualizations in R.

\bibliographystyle{aaai}
\bibliography{flow_based}

\captionsetup[figure]{labelformat=empty}

\begin{figure}
\begin{centering}
\includegraphics[width=0.85\columnwidth]{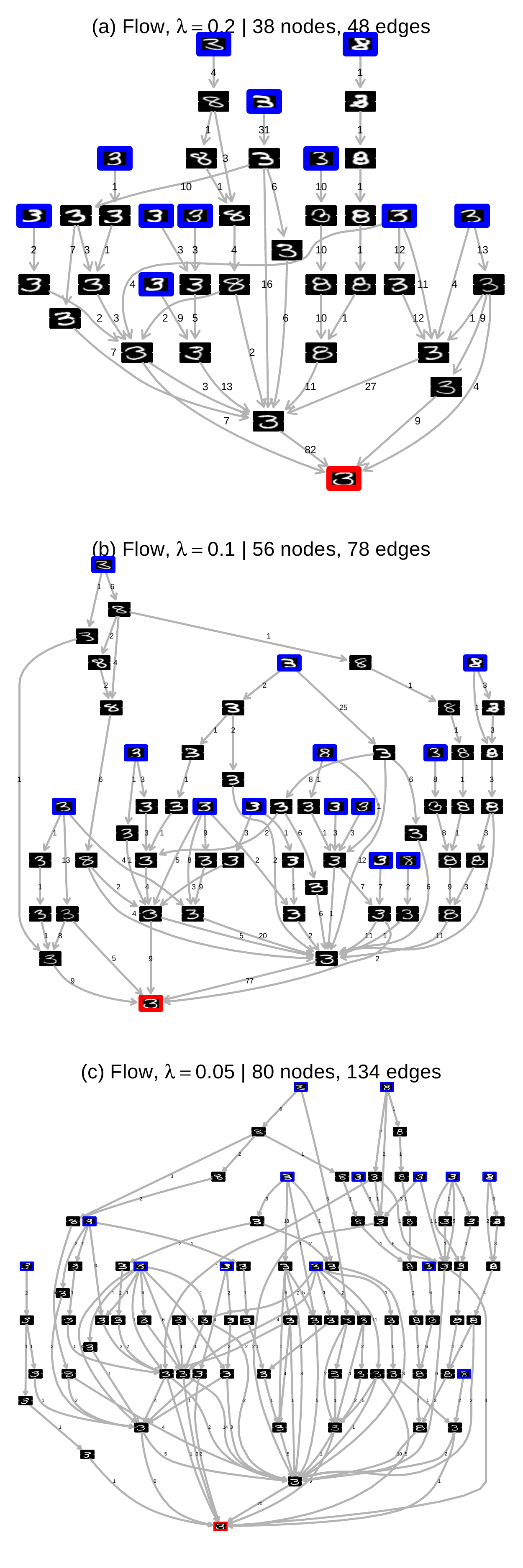} 
\par\end{centering}

\caption{Supplementary example 1}
\end{figure}

\begin{figure}
\begin{centering}
\includegraphics[width=0.85\columnwidth]{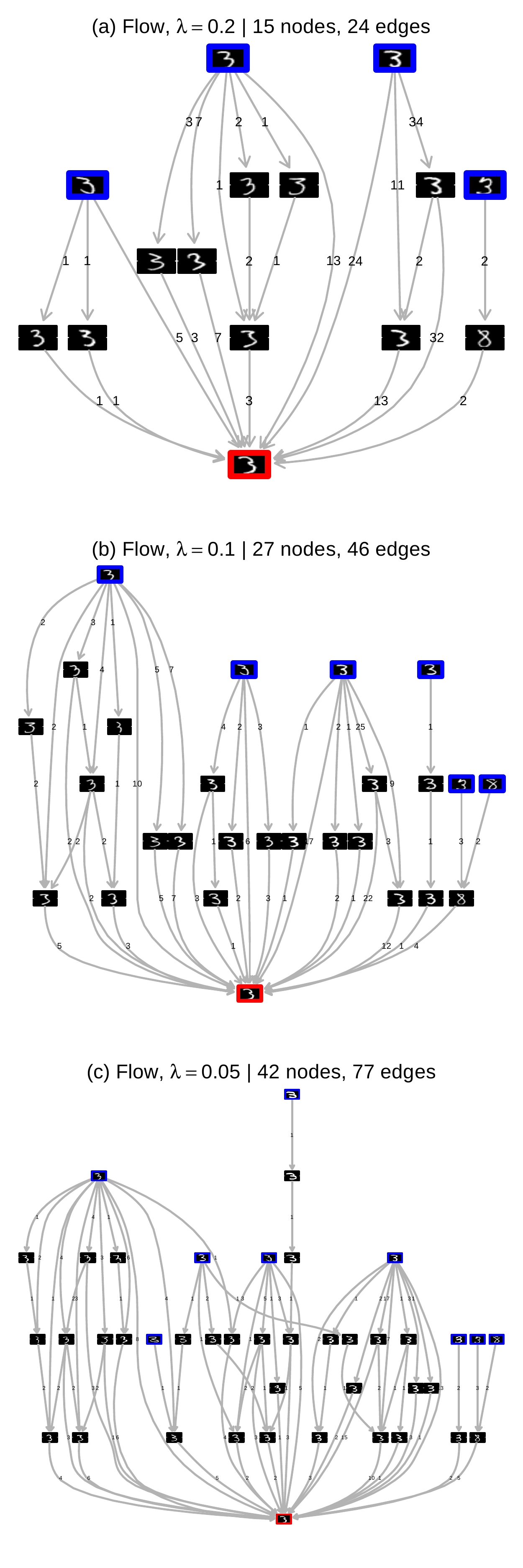} 
\par\end{centering}

\caption{Supplementary example 2}
\end{figure}

\begin{figure}
\begin{centering}
\includegraphics[width=0.85\columnwidth]{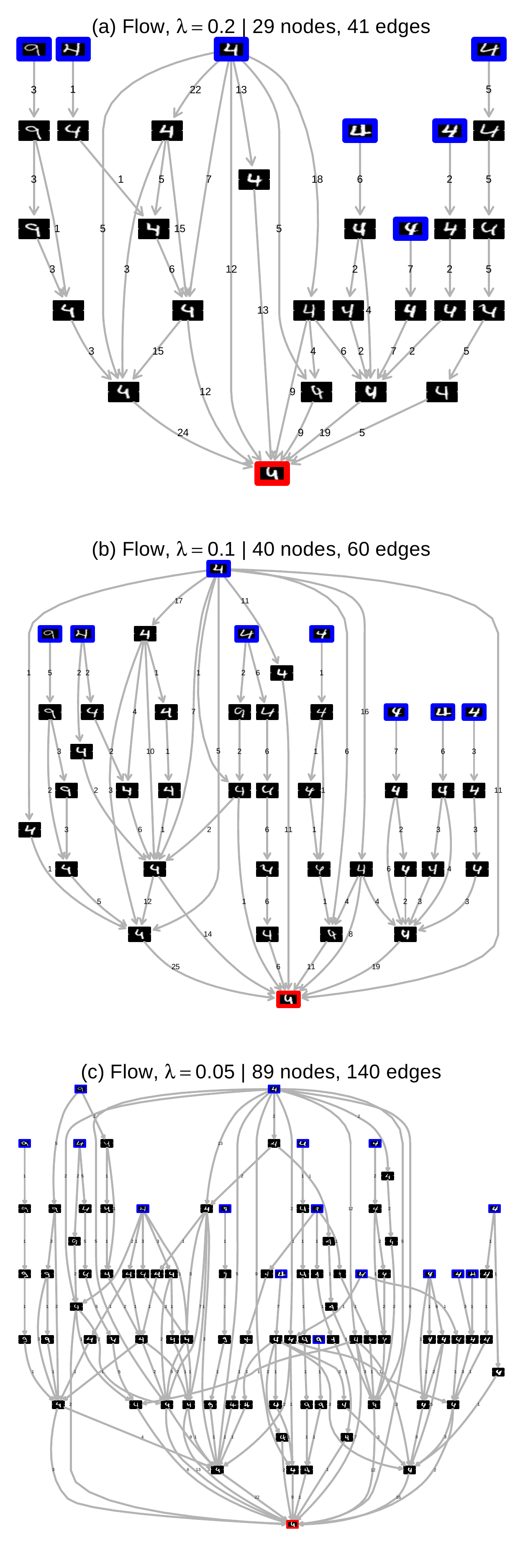} 
\par\end{centering}

\caption{Supplementary example 3}
\end{figure}

\begin{figure}
\begin{centering}
\includegraphics[width=0.85\columnwidth]{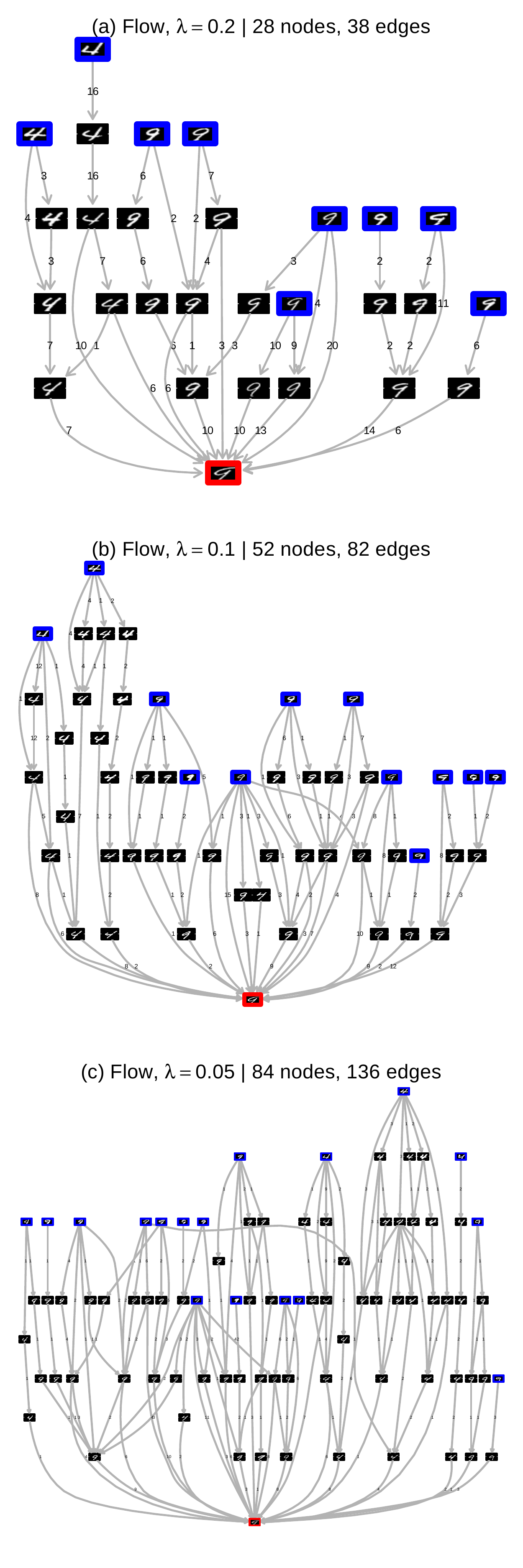} 
\par\end{centering}

\caption{Supplementary example 4}
\end{figure}

\end{document}